\providecommand{\algorithmname}{Algorithm}
\newtheorem{thm}{\protect\theoremname}
\newtheorem{lem}[thm]{\protect\lemmaname}
\newtheorem{rem}[thm]{\protect\remarkname}
\algnewcommand{\Inputs}[1]{%
  \State \textbf{Inputs:}
  \Statex \hspace*{\algorithmicindent}\parbox[t]{.8\linewidth}{\raggedright #1}
}
\algnewcommand{\Initialize}[1]{%
  \State \textbf{Initialize:}
  \Statex \hspace*{\algorithmicindent}\parbox[t]{.8\linewidth}{\raggedright #1}
}
\algnewcommand{\Returns}[1]{%
  \State \textbf{Return:}
  \Statex \hspace*{\algorithmicindent}\parbox[t]{.8\linewidth}{\raggedright #1}
}
\providecommand{\lemmaname}{Lemma}
\providecommand{\remarkname}{Remark}
\providecommand{\theoremname}{Theorem}
\title{\LARGE \bf
A Hamilton\textendash Jacobi Formulation for Optimal Coordination
of Heterogeneous Multiple Vehicle Systems*
}
\author{Matthew R. Kirchner$^{1}$, Mark J. Debord$^{2}$, and Jo\~{a}o P. Hespanha$^{1}$% <-this % stops a space
\thanks{*This research was supported in part by the Office of Naval Research under Grant N00014-19-WX00546 and Grant N00014-20-WX00303 and in part by the CogDeCon program under contract FA8750-18-C-0014 and contract 88ABW-2019-4739.}% <-this % stops a space
\thanks{$^{1}$M. Kirchner and J. P. Hespanha  are with Center for Control, Dynamical Systems and Computation (CCDC), University of California, Santa Barbara, CA, USA. {\tt\small \{kirchner, hespanha\}@ece.ucsb.edu}}%
\thanks{$^{2}$M. Debord is with the Image and Signal Processing Branch, Research and Intelligence Department, Code D5J1000, Naval Air Warfare Center Weapons Division, China Lake, CA, USA. {\tt\small  mark.debord@navy.mil}}%
}
\begin{document}

\maketitle
\thispagestyle{empty}
\pagestyle{empty}

%%%%%%%%%%%%%%%%%%%%%%%%%%%%%%%%%%%%%%%%%%%%%%%%%%%%%%%%%%%%%%%%%%%%%%%%%%%%%%%%
\begin{abstract}

We present a method for optimal coordination of multiple vehicle teams
when multiple endpoint configurations are equally desirable, such
as seen in the autonomous assembly of formation flight. The individual
vehicles' positions in the formation are not assigned a priori and
a key challenge is to find the optimal configuration assignment along
with the optimal control and trajectory. Commonly, assignment and
trajectory planning problems are solved separately. We introduce a
new multi-vehicle coordination paradigm, where the optimal goal assignment
and optimal vehicle trajectories are found simultaneously from a viscosity
solution of a single Hamilton\textendash Jacobi (HJ) partial differential
equation (PDE), which provides a necessary and sufficient condition
for global optimality. Intrinsic in this approach is that individual
vehicle dynamic models need not be the same, and therefore can be
applied to heterogeneous systems. Numerical methods to solve the HJ
equation have historically relied on a discrete grid of the solution
space and exhibits exponential scaling with system dimension, preventing
their applicability to multiple vehicle systems. By utilizing a generalization
of the Hopf formula, we avoid the use of grids and present a method
that exhibits polynomial scaling in the number of vehicles. 

\end{abstract}

%%%%%%%%%%%%%%%%%%%%%%%%%%%%%%%%%%%%%%%%%%%%%%%%%%%%%%%%%%%%%%%%%%%%%%%%%%%%%%%%
\section{Introduction}

Multi-robot motion planning presents several challenges, and key among
them are trajectory planning and formation control, as well as the
assignment of vehicles to goal states. When the assignment of vehicles
to goal states is made beforehand, many methods have been proposed.
These include control theory approaches \cite{stipanovic2004decentralized,dong2015time},
graph-based techniques \cite{felner2017search}, and optimization
methods \cite{mellinger2012mixed,augugliaro2012generation}. Approximations
to assist in dimensionality reductions have also been proposed, such
as sequential methods \cite{chen2015safe} and hybrid approaches \cite{debord2018trajectory,robinson2018efficient}.

\begin{figure}
\begin{centering}
\includegraphics[width=8cm]{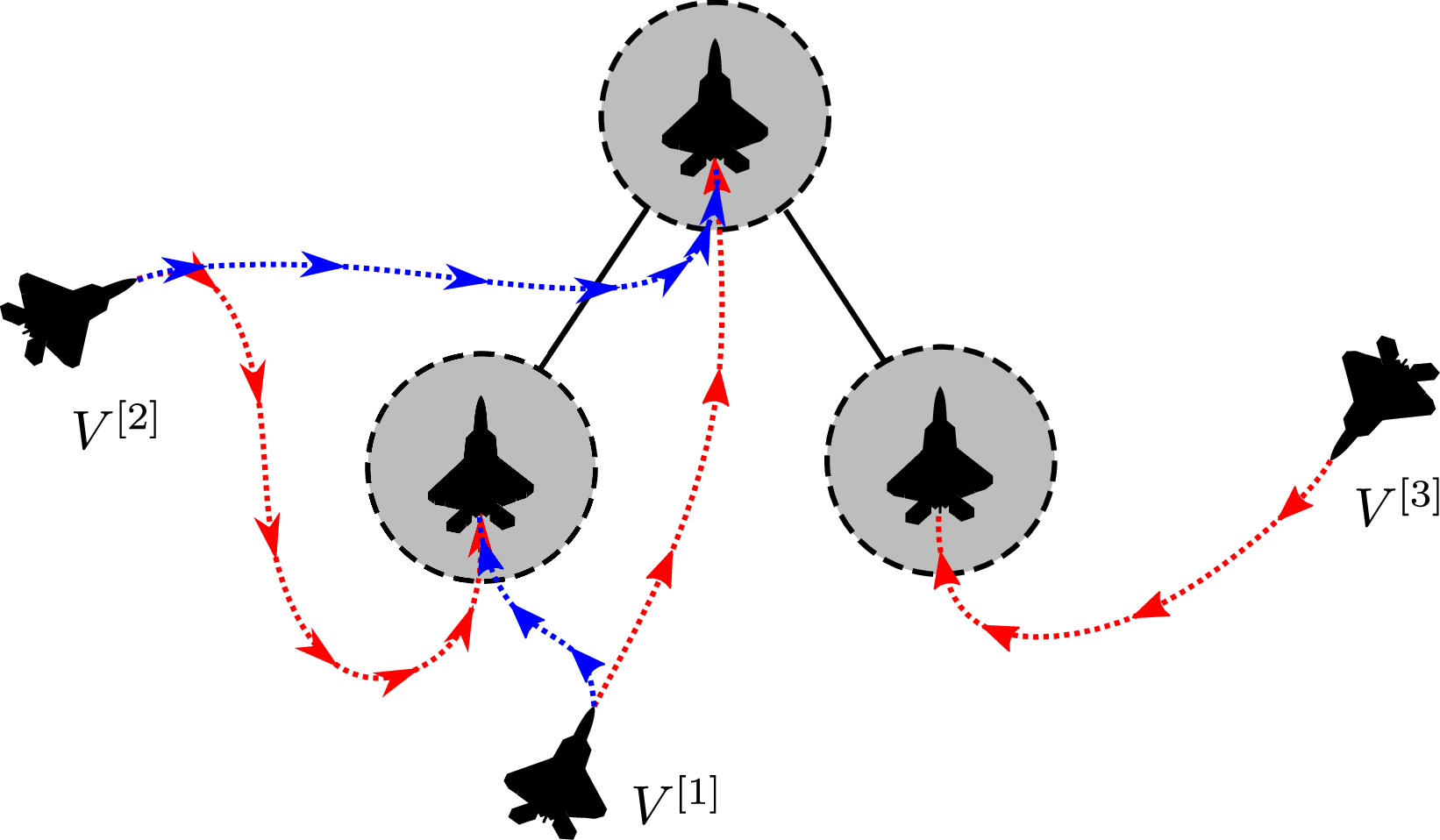}
\par\end{centering}
\caption{An illustration of the vehicle coordination problem. If a vehicle's
position in the formation is not assigned a priori, then many different
configurations are possible to achieve the desired formation, each
with a differing system objective cost. Two such possibilities are
shown, one in red and an alternate assignment in blue.\label{fig:Multi Vehicle illistration}}
\end{figure}

When the goal states of multi-vehicle teams are not set a priori,
it is necessary to determine which vehicle to allocate to which terminal
goal. There is a large body of literature developing allocation and
assignment algorithms \cite{bertsekas1991linear}, and applying these
methods to engineering problems has appeared in diverse forms including
sensor coverage \cite{marden2008distributed}, weapon-target assignment
\cite{arslan2007autonomous}, or network routing \cite{marden2012price}.
Dynamics are not considered in this body of work, which commonly study
worst-case system performance under equilibrium conditions using a
game-theoretical framework \cite{roughgarden2005selfish,marden2014generalized,kirchner_sensor_2013}.
Without consideration of dynamics, these methods have limited applicability
to vehicle coordination problems and there exists a technical gap
relating vehicle-goal allocation and motion planning, as most prior
work assumes each is done independently. The critical dependency between
assignment and trajectory planning is illustrated in Figure \ref{fig:Multi Vehicle illistration}.

There have been recent attempts to close this gap as in \cite{honig2018conflict}.
But this work only evaluates discrete spatial waypoints on a rectangular
grid without considering vehicle dynamics, and therefore the paths
generated are not feasible for most physical systems. Turpin et al. \cite{turpin2014capt} considered continuous dynamics, but was restricted to only single integrator dynamics and assumes homogeneous teams of vehicles. Morgan et al.
\cite{morgan2016swarm} generalizes this concept to linear, discrete-time
dynamics and constructs a non-linear programming problem that is solved
with sequential convex programming in conjunction with an auction
algorithm \cite{bertsekas1991linear}. However, both \cite{morgan2016swarm}
and \cite{honig2018conflict} restrict the cost function to one of
additive individual vehicle weights (similar to the cost function
presented in \cite{bertsekas1991linear}), which excludes many vehicle
optimization problems of interest, including assembling the desired
formation shape in minimal time. 

We present how the optimal assignment and trajectory can be found
simultaneously from the viscosity solution to a single Hamilton\textendash Jacobi
PDE, a necessary and sufficient condition for optimality. We assume
the vehicles can each have unique linear dynamics. Remarkably, we
show that the global viscosity solution can be found as the solution
to a linear bottleneck assignment problem (LBAP) \cite{gross1959bottleneck}.\emph{
}This fact can be utilized to construct a level set method that has
polynomial computational scaling with the number of vehicles and still
ensures a global optimum.

Our formulation has a close relation to reachability analysis which
can be useful for the design and analysis of heterogeneous systems.
When the dynamics of some vehicles differ greatly from the rest, some
formations may not be feasible (e.g. a slow-moving vehicle attempting
to join a formation with a much faster vehicle). The zero sub-level
set of the viscosity solution of a related HJ PDE defines an implicit
surface representation of the backwards reachable set \cite{mitchell2005time},
and from this one determines whether the formation can be achieved
given the unique collection of vehicle dynamics.

Traditionally, numerical solutions to HJ equations require a dense,
discrete grid of the solution space \cite{osher2006level,mitchell2008flexible}.
Computing the elements of this grid scales poorly with dimension and
has limited use for problems with dimension greater than four. The
exponential dimensional scaling in optimization is sometimes referred
to as the ``curse of dimensionality'' \cite{bellman2015adaptive}.
A new result in \cite{darbon2016algorithms} discovered a numerical
solution based on the Hopf formula \cite{hopf1965generalized} that
does not require a grid and can be used to efficiently compute solutions
of a certain class of Hamilton\textendash Jacobi PDEs. However, that
only applies to systems with time-independent Hamiltonians of the
form $\dot{x}=f\left(u\left(t\right)\right)$, and has limited use
for general linear control problems. Recently, the classes of systems
were expanded upon and generalizations of the Hopf formula are used
to solve optimal linear control problems in high-dimensions \cite{kirchner2018primaldual}
and differential games as applied to multi-vehicle, collaborative
pursuit-evasion problems \cite{kirchner2017time}.

The key contributions of this paper are to formulate the multi-vehicle
coordination problem as a single Hamilton\textendash Jacobi equation,
show that the solution of which can be found from an LBAP, and use
the generalized Hopf formula to simultaneously solve for optimal trajectory,
vehicle control, and goal assignment. By utilizing the generalized
Hopf formula, we create a level-set method to find the viscosity solution
of the HJ in a computationally efficient manner.

We introduce a multi-vehicle dynamical system and formulate the coordination
problem in Section \ref{sec:Problem-Formulation}. Section \ref{sec:Hamilton=002013Jacobi-Equations-For}
formulates the optimal vehicle coordination problem as the viscosity
solution to a single Hamilton\textendash Jacobi PDE. Section \ref{sec:A-Level-Set}
constructs a level-set method for the multi-vehicle coordination problem
and utilizes the generalized Hopf formula as a computational tool
to find the viscosity solution to the HJ PDE of the joint, mutli-vehicle
system. Section \ref{sec:Results} shows the method as applied to
several examples including a case of planar motion planning with 4
vehicles. This example shows how assignment and path planning cannot
be decoupled and even if a single vehicle has a different initial
condition, it may result in a different assignment for optimal coordination.

\section{\label{sec:Problem-Formulation}Problem Formulation}

We consider a system that consists of $N$ vehicles and each vehicle,
$i\in\mathcal{V}=\left\{ 1,\ldots,N\right\} $, has linear dynamics
\begin{equation}
\frac{d}{ds}x_{i}\left(s\right):=A_{i}x_{i}\left(s\right)+B_{i}\alpha_{i}\left(s\right),\label{eq:general dynamics}
\end{equation}
for $s\in\left[0,t\right]$, where $x_{i}\in\mathbb{R}^{n_{i}}$ is
the system state and $\alpha_{i}\left(s\right)\in\mathcal{A}_{i}\subset\mathbb{R}^{m_{i}}$
is the control input, constrained to a convex admissible control set
$\mathcal{A}_{i}$. We add the assumption that no eigenvalue of matrix
$A_{i}$ has a strictly positive real part. We let $\left[0,t\right]\ni s\mapsto\gamma_{i}\left(s;x_{i},\alpha_{i}\left(\cdot\right)\right)\in\mathbb{R}^{n_{i}}$
denote a state trajectory for vehicle $i$ that evolves in time with
measurable control sequence $\alpha_{i}\left(\cdot\right)\in\mathcal{A}_{i}$,
according to $\left(\ref{eq:general dynamics}\right)$ starting from
initial state $x_{i}$ at $s=0$. The trajectory $\gamma_{i}$ is
a solution of $\left(\ref{eq:general dynamics}\right)$ in that it
satisfies $\left(\ref{eq:general dynamics}\right)$ almost everywhere:
\begin{align*}
\frac{d}{ds}\gamma_{i}\left(s;x_{i},\alpha_{i}\left(\cdot\right)\right) & =A_{i}\gamma_{i}\left(s;x_{i},\alpha_{i}\left(\cdot\right)\right)+B_{i}\alpha_{i}\left(s\right),\\
\gamma_{i}\left(0;x_{i},\alpha_{i}\left(\cdot\right)\right) & =x_{i}.
\end{align*}

\subsection{Multi-Vehicle Model}

For the set of $N$ vehicles, we construct a joint state space with
state vector $x=\left(x_{1},\ldots,x_{i},\ldots,x_{N}\right)\in\mathbb{R}^{n}$
and control $\alpha=\left(\alpha_{1},\ldots,\alpha_{i},\ldots,\alpha_{N}\right)\in\mathcal{A}=\mathcal{A}_{1}\times\cdots\times\mathcal{A}_{i}\times\ldots\times\mathcal{A}_{N}\subset\mathbb{R}^{m}$
which is written as follows
\begin{align}
\dot{x}=\left[\begin{array}{c}
\dot{x_{1}}\\
\vdots\\
\dot{x_{i}}\\
\vdots\\
\dot{x}_{N}
\end{array}\right] & =\left[\begin{array}{ccccc}
A_{1} &  & \cdots &  & 0\\
 & \ddots\\
\vdots &  & A_{i} &  & \vdots\\
 &  &  & \ddots\\
0 &  & \cdots &  & A_{N}
\end{array}\right]\left[\begin{array}{c}
x_{1}\\
\vdots\\
x_{i}\\
\vdots\\
x_{N}
\end{array}\right]\nonumber \\
 & +\left[\begin{array}{ccccc}
B_{1} &  & \cdots &  & 0\\
 & \ddots\\
\vdots &  & B_{i} &  & \vdots\\
 &  &  & \ddots\\
0 &  & \cdots &  & B_{N}
\end{array}\right]\left[\begin{array}{c}
\alpha_{1}\\
\vdots\\
\alpha_{i}\\
\vdots\\
\alpha_{N}
\end{array}\right]\nonumber \\
\implies\dot{x} & =Ax+B\alpha,\label{eq:joint state equation}
\end{align}
We reiterate the above definition of the joint state space with the
fact that when quantities $x$, $A$, $B$, $\gamma$, and $\alpha$
appear without subscripts, they refer to the joint system in $\left(\ref{eq:joint state equation}\right)$,
and when subscripts are used, they refer to a specific individual
vehicle as defined in $\left(\ref{eq:general dynamics}\right)$.

\subsection{Vehicle Coordination}

We assume there exists a set of $N$ goals, and for each vehicle $i\in\mathcal{V}$
we associate closed convex sets $\Omega_{i,j}\subset\mathbb{R}^{n_{i}},\,j\in\mathcal{V}$
with the understanding that $x_{i}\in\Omega_{i,j}$ means the vehicle
$i$ is assigned to goal $j$. Our goal is to make sure that we have
one vehicle at each goal, but it does not matter which vehicle is
at each goal. This goal can be expressed by the requirements that
the multi-vehicle state, $x$, belongs to the following goal set
\begin{equation}
\Theta:=\left\{ x\in\mathbb{R}^{n}\,:\,\exists\sigma\in\mathcal{S}_{N}\,s.t.\,\forall i\,\in\mathcal{V},\,x_{i}\in\Omega_{i,\sigma\left(i\right)}\right\} ,\label{eq:global goal set}
\end{equation}
where $\mathcal{S}_{N}$ denotes the set of all permutations of $\mathcal{V}$.
We represent $\Theta$ implicitly with the function $J:\mathbb{R}^{n}\rightarrow\mathbb{R}$
such that
\begin{equation}
\Theta=\left\{ x\in\mathbb{R}^{n}|J\left(x\right)\leq0\right\} ,\label{eq: implicit surface rep-1}
\end{equation}
and use it to construct a cost functional for the system trajectory
$\gamma\left(s;x,\alpha\left(\cdot\right)\right)$, given terminal
time $t$ as 
\begin{equation}
R\left(t,x,\alpha\left(\cdot\right)\right)=\int_{0}^{t}C\left(\alpha\left(s\right)\right)ds+J\left(\gamma\left(t;x,\alpha\left(\cdot\right)\right)\right),\label{eq: Cost Function-1}
\end{equation}
where the function $C:\mathbb{R}^{m}\rightarrow\mathbb{R}\cup\left\{ +\infty\right\} $
represents the rate that cost is accrued over time. The value function
$\varphi:\mathbb{R}^{n}\times\left(0,+\infty\right)\rightarrow\mathbb{R}$
is defined as the minimum cost, $R$, among all admissible controls
for a given initial state $x$ with
\begin{equation}
\varphi\left(x,t\right)=\underset{\alpha\left(\cdot\right)\in\mathcal{A}}{\text{inf}}\,R\left(t,x,\alpha\left(\cdot\right)\right).\label{eq: Value function-1}
\end{equation}

\section{Hamilton\textendash Jacobi Equations For Optimal Coordination\label{sec:Hamilton=002013Jacobi-Equations-For}}

The value function in $\left(\ref{eq: Value function-1}\right)$ satisfies
the dynamic programming principle \cite{bryson1975applied,evans10}
and also satisfies the following initial value Hamilton\textendash Jacobi
(HJ) equation with $\varphi$ being the viscosity solution of
\begin{align}
\frac{\partial\varphi}{\partial s}\left(x,s\right)+H\left(s,x,\nabla_{x}\varphi\left(x,s\right)\right) & =0,\label{eq:Initial value HJ PDE-1}\\
\varphi\left(x,0\right) & =J\left(x\right),\nonumber 
\end{align}
for $s\in\left[0,t\right]$, where the Hamiltonian $H:\left(0,+\infty\right)\times\mathbb{R}^{n}\times\mathbb{R}^{n}\rightarrow\mathbb{R}\cup\left\{ +\infty\right\} $
is defined by
\begin{equation}
H\left(s,x,p\right)=-x^{\top}A^{\top}p+\underset{\alpha\in\mathbb{R}^{m}}{\text{sup}}\left\{ \left\langle -B\alpha,p\right\rangle -C\left(\alpha\right)\right\} ,\label{eq: Basic Hamiltonian definition-1}
\end{equation}
where $p:=\nabla_{x}\varphi\left(x,s\right)$ in\emph{ $\left(\ref{eq: Basic Hamiltonian definition-1}\right)$
}denotes the \emph{costate}. We denote by $\left[0,t\right]\ni s\mapsto\lambda\left(s;x,\alpha\left(\cdot\right)\right)\in\mathbb{R}^{n}$
the costate trajectory that can be shown to satisfy almost everywhere:
\begin{align*}
\frac{d}{ds}\lambda\left(s;x,\alpha\left(\cdot\right)\right) & =\nabla_{x}f\left(\gamma\left(s;x,\alpha\left(\cdot\right)\right),s\right)^{\top}\lambda\left(s;x,\alpha\left(\cdot\right)\right)\\
\lambda\left(t;x,\alpha\left(\cdot\right)\right) & =\nabla_{x}\varphi\left(\gamma\left(t;x,\alpha\left(\cdot\right)\right)\right),
\end{align*}
with initial costate denoted by $\lambda\left(0;x,\alpha\left(\cdot\right)\right)=p_{0}$.
With a slight abuse of notation, we will hereafter use $\lambda\left(s\right)$
to denote $\lambda\left(s;x,\alpha\left(\cdot\right)\right)$, when
the initial state and control sequence can be inferred through context
with the corresponding state trajectory, $\gamma\left(s;x,\alpha\left(\cdot\right)\right)$.

\subsection{System Hamiltonian}

We consider a time-optimal formulation with
\[
C\left(\alpha\right)=\mathcal{I}_{\mathcal{A}}\left(\alpha\right),
\]
where $\mathcal{I}_{\mathcal{A}}:\mathbb{R}^{m}\rightarrow\mathbb{R}\cup\left\{ +\infty\right\} $
is the characteristic function of the set of admissible controls and
is defined by
\[
\mathcal{I}_{\mathcal{A}}\left(\alpha\right)=\begin{cases}
0 & \text{if}\,\alpha\in\mathcal{A}\\
+\infty & \text{otherwise.}
\end{cases}
\]
In this case, the integral term in $\left(\ref{eq: Cost Function-1}\right)$
disappears (as long as $\alpha$ remains in $\mathcal{A})$ and $R\left(t,x,\alpha\left(\cdot\right)\right)$
is simply the value of $J\left(\gamma\left(t;x,\alpha\left(\cdot\right)\right)\right)$
at the terminal time, $t$. For this problem
\[
\varphi\left(x,t\right)=\underset{\alpha\left(\cdot\right)\in\mathcal{A}}{\text{inf}}\,J\left(\gamma\left(t;x,\alpha\left(\cdot\right)\right)\right)\leq0
\]
means that there exists a feasible trajectory for the vehicles the
ends at the state $x\left(t\right)\in\Theta$; whereas $\varphi\left(x,t\right)>0$
means that such trajectories do not exist under the system dynamics
and initial conditions. Since each vehicle has independent dynamics,
the Hamiltonian in $\left(\ref{eq: Basic Hamiltonian definition-1}\right)$
is of the form
\begin{equation}
H\left(s,x,p\right)=\sum_{i}H_{i}\left(s,x_{i},p_{i}\right),\label{eq:sum hamiltonian}
\end{equation}
where each vehicle's Hamiltonian is given by
\begin{align}
H_{i}\left(s,x_{i},p_{i}\right)= & -x_{i}^{\top}A_{i}^{\top}p_{i}\label{eq:single vehicle ham}\\
 & +\underset{\alpha_{i}\in\mathbb{R}^{m_{i}}}{\text{sup}}\left\{ \left\langle -B_{i}\alpha_{i},p_{i}\right\rangle -\mathcal{I}_{\mathcal{A}_{i}}\left(\alpha_{i}\right)\right\} .\nonumber 
\end{align}

\subsection{Linear Bottleneck Assignment}

Our goal is solving the Hamilton\textendash Jacobi PDE in $\left(\ref{eq:Initial value HJ PDE-1}\right)$,
and any $J\left(x\right)$ that satisfies $\left(\ref{eq: implicit surface rep-1}\right)$
is, in general, non-convex and presents numerical challenges. We show
that we can overcome this challenge by alternatively solving for the
global value function with the following linear bottleneck assignment
problem \cite{gross1959bottleneck}:
\begin{equation}
\varphi\left(x,t\right)=\underset{\sigma\in\mathcal{S}_{N}}{\min}\,\underset{i\in\mathcal{V}}{\max}\,\phi_{i,\sigma\left(i\right)}\left(x_{i},t\right),\label{eq:Viscosity as LBAP}
\end{equation}
where $\phi_{i,j}\left(x_{i},t\right)$ is the viscosity solution
to
\begin{align}
\frac{\partial\phi_{i,j}}{\partial s}\left(x_{i},s\right)+H_{i}\left(s,x,\nabla_{x}\phi_{i,j}\left(x_{i},s\right)\right) & =0,\label{eq: family of HJ equations}\\
\phi_{i,j}\left(x_{i},0\right) & =J_{i,j}\left(x_{i}\right),\nonumber 
\end{align}
with Hamiltonian defined by $\left(\ref{eq:single vehicle ham}\right)$.
The function $J_{i,j}:\mathbb{R}^{n_{i}}\rightarrow\mathbb{R}$ is
an implicit surface representation of $\Omega_{i,j}$ such that
\begin{equation}
\Omega_{i,j}=\left\{ x_{i}\in\mathbb{R}^{n_{i}}|J_{i,j}\left(x_{i}\right)\leq0\right\} .\label{eq:implicit of goal sets}
\end{equation}

The solution to $\left(\ref{eq:Viscosity as LBAP}\right)$ can be
found from $\phi_{i,j}$ using the appropriate linear bottleneck assignment
algorithms (for example \cite[Section 6.2]{burkhard2012assignment}),
requiring only $N^{2}$ evaluations of lower dimensional viscosity
solutions and avoiding computation involving all $\left|\mathcal{S}_{N}\right|=N!$
permutations of vehicle-goal pairs. Also of note is that if each HJ
equation $\left(\ref{eq: family of HJ equations}\right)$ has convex
initial data, it enables the use of computationally efficient techniques
that can guarantee convergence to the appropriate viscosity solution.
Additionally, since each of the solutions are independent, the $N^{2}$
solutions can be computed in parallel.

We introduce a set of mild regularity assumptions that guarantee Hamilton\textendash Jacobi
equations have a unique viscosity solution \cite[Chapter 7, p. 63]{subbotin1995generalized}:
\begin{enumerate}
\item Each Hamiltonian
\[
\left[0,t\right]\times\mathbb{R}^{n_{i}}\times\mathbb{R}^{n_{i}}\ni\left(s,x_{i},p_{i}\right)\mapsto H_{i}\left(s,x_{i},p_{i}\right)\in\mathbb{R},\,\forall i
\]
is continuous.
\item There exists a constant $c_{i}>0$ such that for all $\left(s,x_{i}\right)\in\left[0,t\right]\times\mathbb{R}^{n_{i}}$
and for all $p',p''\in\mathbb{R}^{n_{i}}$, the following inequalities
hold
\[
\left|H_{i}\left(s,x_{i},p'\right)-H_{i}\left(s,x_{i},p''\right)\right|\leq\rho_{i}\left(x_{i}\right)\left\Vert p'-p''\right\Vert ,\,\forall i
\]
and
\[
\left|H_{i}\left(s,x_{i},0\right)\right|\leq\rho_{i}\left(x_{i}\right),\,\forall i
\]
with $\rho_{i}\left(x_{i}\right)=c_{i}\left(1+\left\Vert x_{i}\right\Vert \right)$.
\item For any compact set $M\subset\mathbb{R}^{n_{i}}$ there exists a constant
$\kappa_{i}\left(M\right)>0$ such that for all $x',x''\in M$ and
for all $\left(s,p_{i}\right)\in\left[0,t\right]\times\mathbb{R}^{n_{i}}$
the inequality holds, $\forall i$
\[
\left|H_{i}\left(s,x',p_{i}\right)-H_{i}\left(s,x'',p_{i}\right)\right|\leq\mu_{i}\left(p_{i}\right)\left\Vert x'-x''\right\Vert ,
\]
with $\mu_{i}\left(p_{i}\right)=\kappa_{i}\left(M\right)\left(1+\left\Vert p_{i}\right\Vert \right)$.
\item Each terminal cost function
\[
\mathbb{R}^{n_{i}}\ni x_{i}\mapsto J_{i,j}\left(x_{i}\right)\in\mathbb{R},\,\forall i,j
\]
is continuous.
\end{enumerate}
\begin{lem}
\label{cor:global ham meets assumptions}If each vehicle Hamiltonian,
$H_{i}\left(s,x_{i},p_{i}\right)$, meets assumptions 1-3, then the
Hamiltonian defined in $\left(\ref{eq:sum hamiltonian}\right)$ also
meets assumptions 1-3.
\end{lem}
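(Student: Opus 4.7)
The plan is to verify each of the three assumptions for the joint Hamiltonian $H(s,x,p)=\sum_i H_i(s,x_i,p_i)$ by exploiting the separable structure in $\left(\ref{eq:sum hamiltonian}\right)$: since the coordinates $x_i$ and $p_i$ are projections of the joint vectors $x$ and $p$ under continuous linear maps, each summand inherits continuity and the desired Lipschitz-type bounds from its single-vehicle counterpart, and we only need to aggregate them into constants for the joint system.

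First I would check assumption 1: the maps $(s,x,p)\mapsto(s,x_i,p_i)$ are continuous, so each $(s,x,p)\mapsto H_i(s,x_i,p_i)$ is continuous by composition, and a finite sum of continuous functions is continuous. For assumption 2, I would apply the triangle inequality to $\lvert H(s,x,p')-H(s,x,p'')\rvert\leq\sum_i \lvert H_i(s,x_i,p'_i)-H_i(s,x_i,p''_i)\rvert$, then use the individual bound with $\rho_i(x_i)=c_i(1+\lVert x_i\rVert)$ together with the elementary inequalities $\lVert p'_i-p''_i\rVert\leq\lVert p'-p''\rVert$ and $\lVert x_i\rVert\leq\lVert x\rVert$. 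This yields a bound of the form $\bigl(\sum_i c_i(1+\lVert x\rVert)\bigr)\lVert p'-p''\rVert\leq c(1+\lVert x\rVert)\lVert p'-p''\rVert$ with $c:=N\max_i c_i$ (absorbing $N$ into the constant). The same aggregation gives $\lvert H(s,x,0)\rvert\leq\sum_i \rho_i(x_i)\leq c(1+\lVert x\rVert)$, as required.

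For assumption 3, the key step is to handle the compact set. Given any compact $M\subset\mathbb{R}^n$, each coordinate projection $M_i:=\{x_i:x\in M\}\subset\mathbb{R}^{n_i}$ is compact as the image of a compact set under a continuous map, so the hypothesis furnishes constants $\kappa_i(M_i)$. Then, again by the triangle inequality together with $\lVert x'_i-x''_i\rVert\leq\lVert x'-x''\rVert$ and $\lVert p_i\rVert\leq\lVert p\rVert$, I obtain
\[
\lvert H(s,x',p)-H(s,x'',p)\rvert\leq\Bigl(\sum_i\kappa_i(M_i)\Bigr)(1+\lVert p\rVert)\,\lVert x'-x''\rVert,
\]
so the choice $\kappa(M):=\sum_i\kappa_i(M_i)$ works with $\mu(p)=\kappa(M)(1+\lVert p\rVert)$.

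I do not anticipate a genuine obstacle here; the argument is essentially bookkeeping. The only point that requires some care is the consistent use of the inequalities $\lVert x_i\rVert\leq\lVert x\rVert$ and $\lVert p_i\rVert\leq\lVert p\rVert$ (valid since $x=(x_1,\ldots,x_N)$ is a block-stacked vector) so that the per-vehicle bounds combine into a single bound of the prescribed form, and verifying that the coordinate-projected compact sets used in assumption 3 are themselves compact so that the per-vehicle constants $\kappa_i(M_i)$ are defined.
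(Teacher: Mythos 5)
Your proposal is correct and follows essentially the same route as the paper's proof: triangle inequality over the sum, the per-vehicle bounds, and the block-vector inequalities $\left\Vert x_{i}\right\Vert \leq\left\Vert x\right\Vert$ and $\left\Vert p_{i}'-p_{i}''\right\Vert \leq\left\Vert p'-p''\right\Vert$, aggregating the constants (the paper takes $c=\sum_{i}c_{i}$ where you take $N\max_{i}c_{i}$, an immaterial difference). Your remark that the coordinate projections of the compact set $M$ are themselves compact is a small point of extra care that the paper glosses over, but it does not change the argument.
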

The proof is given in the appendix.
\begin{thm}
Under assumptions 1-4, $\phi_{i,j}$ is a unique viscosity solution
to $\left(\ref{eq: family of HJ equations}\right)$ for all $i,j$,
and there exists a $J:\mathbb{R}^{n}\rightarrow\mathbb{R}$ that satisfies
$\left(\ref{eq: implicit surface rep-1}\right)$ such that with the
Hamiltonian given by $\left(\ref{eq:sum hamiltonian}\right)$, $\left(\ref{eq:Viscosity as LBAP}\right)$
is a viscosity solution to $\left(\ref{eq:Initial value HJ PDE-1}\right)$.
\end{thm}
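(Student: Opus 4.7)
The plan is to split the claim into three parts: (i) uniqueness of each $\phi_{i,j}$, (ii) exhibition of a concrete implicit representation $J$ of $\Theta$, and (iii) identification of $\min_{\sigma}\max_{i}\phi_{i,\sigma(i)}$ with the value function of the joint problem, from which the viscosity-solution property will follow. Part (i) is immediate: assumptions 1--3 for $H_{i}$ together with continuity of $J_{i,j}$ (assumption 4) let me invoke the Subbotin uniqueness theorem cited in the paper to conclude that each equation in (\ref{eq: family of HJ equations}) admits a unique viscosity solution, namely $\phi_{i,j}$.

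For part (ii), I would define
\[
J(x) := \min_{\sigma \in \mathcal{S}_{N}}\,\max_{i \in \mathcal{V}}\,J_{i,\sigma(i)}(x_{i}).
\]
Continuity of each $J_{i,j}$ and the fact that only a finite min--max is involved yield continuity of $J$, so the joint terminal cost inherits assumption 4. Moreover, using (\ref{eq:implicit of goal sets}), $J(x)\leq 0$ iff some permutation $\sigma$ places each component $x_{i}$ in $\Omega_{i,\sigma(i)}$, i.e., iff $x\in\Theta$ by (\ref{eq:global goal set}); this verifies (\ref{eq: implicit surface rep-1}).

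Part (iii) is the core of the argument. Because $A$ and $B$ in (\ref{eq:joint state equation}) are block diagonal and $\mathcal{A} = \mathcal{A}_{1}\times\cdots\times\mathcal{A}_{N}$ is a Cartesian product, any admissible joint control $\alpha(\cdot)$ corresponds to an independently chosen tuple $(\alpha_{1}(\cdot),\ldots,\alpha_{N}(\cdot))$ of single-vehicle controls, and the joint trajectory decouples as $\gamma(t;x,\alpha(\cdot)) = (\gamma_{1}(t;x_{1},\alpha_{1}(\cdot)),\ldots,\gamma_{N}(t;x_{N},\alpha_{N}(\cdot)))$. Substituting the chosen $J$ into (\ref{eq: Value function-1}) and swapping the (minimizing) operations $\inf_{\alpha}$ and $\min_{\sigma}$ yields
\[
\varphi(x,t) = \min_{\sigma}\;\inf_{\alpha_{1},\ldots,\alpha_{N}}\;\max_{i}\,J_{i,\sigma(i)}\!\left(\gamma_{i}(t;x_{i},\alpha_{i}(\cdot))\right).
\]
Since the $i$-th entry of the inner $\max$ depends only on $\alpha_{i}$, the product-domain identity $\inf_{\alpha_{1},\ldots,\alpha_{N}}\max_{i}f_{i}(\alpha_{i}) = \max_{i}\inf_{\alpha_{i}}f_{i}(\alpha_{i})$ (whose $\geq$ direction is trivial and whose $\leq$ direction follows by picking $\varepsilon$-minimizers componentwise) collapses the inner problem to $\max_{i}\phi_{i,\sigma(i)}(x_{i},t)$, giving exactly the formula (\ref{eq:Viscosity as LBAP}). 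Finally, by Lemma~\ref{cor:global ham meets assumptions} the joint Hamiltonian (\ref{eq:sum hamiltonian}) satisfies assumptions 1--3, while $J$ satisfies assumption 4, so the value function $\varphi$ is the unique viscosity solution of (\ref{eq:Initial value HJ PDE-1}) by the same Subbotin result; hence the right-hand side of (\ref{eq:Viscosity as LBAP}) is this viscosity solution.

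The main subtlety I expect is the inf--max exchange: one must recognize that the infimum is over measurable functions on $[0,t]$ rather than over finite-dimensional vectors, so justifying the componentwise choice of $\varepsilon$-minimizers requires noting that, because $\mathcal{A}$ is a product and each $f_{i}$ depends on only one coordinate process, the combined $N$-tuple $(\alpha_{1}^{\varepsilon},\ldots,\alpha_{N}^{\varepsilon})$ is itself an admissible measurable control on $[0,t]$. Everything else is a direct assembly of dynamic programming, Lemma~\ref{cor:global ham meets assumptions}, and the cited viscosity uniqueness theorem.
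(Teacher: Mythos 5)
Your proposal is correct, and parts (i) and (ii) coincide with the paper's own argument: the same candidate $J(x)=\min_{\sigma\in\mathcal{S}_{N}}\max_{i\in\mathcal{V}}J_{i,\sigma(i)}(x_{i})$ is constructed, its continuity and the implicit-surface property $\left(\ref{eq: implicit surface rep-1}\right)$ are checked in the same way, and both arguments combine Lemma \ref{cor:global ham meets assumptions} with the cited uniqueness theorem to identify the unique viscosity solutions of $\left(\ref{eq:Initial value HJ PDE-1}\right)$ and $\left(\ref{eq: family of HJ equations}\right)$ with the corresponding value functions. Where you genuinely diverge is the reduction of $\inf_{\alpha}J(\gamma(t;x,\alpha(\cdot)))$ to $\min_{\sigma}\max_{i}\phi_{i,\sigma(i)}(x_{i},t)$. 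The paper fixes an optimal joint control $\alpha^{*}$ and optimal single-vehicle controls $\alpha_{i}^{*}$ and argues by contradiction that $J_{i,\sigma(i)}\bigl(\left[\gamma(t;x,\alpha^{*}(\cdot))\right]_{i}\bigr)=J_{i,\sigma(i)}\bigl(\gamma_{i}(t;x_{i},\alpha_{i}^{*}(\cdot))\bigr)$ for every $i$ and $\sigma$; this presumes the infima in $\left(\ref{eq:equiv value func}\right)$ and $\left(\ref{eq: Optimal traj for vehicle-1}\right)$ are attained, and the asserted componentwise equality is stronger than joint optimality actually delivers, since a component that is not binding in the $\max$ can be individually suboptimal without changing $\min_{\sigma}\max_{i}$. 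Your route---commuting $\inf_{\alpha}$ with $\min_{\sigma}$ and then invoking the product-domain identity $\inf_{\alpha_{1},\ldots,\alpha_{N}}\max_{i}f_{i}(\alpha_{i})=\max_{i}\inf_{\alpha_{i}}f_{i}(\alpha_{i})$ via componentwise $\varepsilon$-minimizers---requires neither attainment of the infima nor any statement about individual components of a joint optimizer, and your observation that the assembled $\varepsilon$-optimal tuple remains an admissible measurable control (because $\mathcal{A}$ is a product and the dynamics are block-diagonal) is precisely what legitimizes the exchange. Both approaches prove the same theorem; yours is the tighter argument, while the paper's is shorter if one grants existence of optimal controls and reads its componentwise claim as applying only to the value of the min--max rather than to each component.
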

\begin{proof}
We will prove the theorem constructively by proposing a particular
$J:\mathbb{R}^{n}\rightarrow\mathbb{R}$ given as
\begin{equation}
J\left(x\right)=\underset{\sigma\in\mathcal{S}_{N}}{\min}J_{\sigma}\left(x\right),\label{eq: implicit surface J as min over J sigma}
\end{equation}
with
\begin{equation}
J_{\sigma}\left(x\right):=\underset{i\in\mathcal{V}}{\max}\,J_{i,\sigma\left(i\right)}\left(x_{i}\right),\label{eq: J sigma as max over J theta}
\end{equation}
where $J_{i,\sigma\left(i\right)}\left(x_{i}\right)$ is the implicit
representation of $\Omega_{i,j}$, defined in $\left(\ref{eq:implicit of goal sets}\right)$,
with $i,j=i,\sigma\left(i\right)$. We see that for any $\sigma\in\mathcal{S}_{N}$
when $x_{i}\in\Omega_{i,\sigma\left(i\right)},\,\forall i\in\mathcal{V}\implies J_{i,\sigma\left(i\right)}\left(x_{i}\right)\leq0,\,\forall i$
which implies $J_{\sigma}\left(x\right)\leq0$. We also see if there
exists an $i$ such that $x_{i}\notin\Omega_{i,\sigma\left(i\right)}$,
then $J_{i,\sigma\left(i\right)}>0$ and implies $J_{\sigma}\left(x\right)>0$.
Therefore the $J$ proposed in $\left(\ref{eq: implicit surface J as min over J sigma}\right)$
satisfies $\left(\ref{eq: implicit surface rep-1}\right)$ and is
an implicit surface representation of the set $\Theta$. Furthermore,
since by assumption each $J_{i,j}\left(x_{i}\right)$ is continuous,
then $\left(\ref{eq: J sigma as max over J theta}\right)$ is convex
as the max of a finite number of continuous functions is also continuous.
This implies $\left(\ref{eq: implicit surface J as min over J sigma}\right)$
is continuous as the minimum of a finite number of coninuous functions
is continuous. Therefore $\left(\ref{eq: implicit surface J as min over J sigma}\right)$
satisfies assumption 4.

From Lemma \ref{cor:global ham meets assumptions}, the system Hamiltonian
defined in $\left(\ref{eq:sum hamiltonian}\right)$ meets assumptions
1-3 and implies that $\left(\ref{eq:Initial value HJ PDE-1}\right)$
has a unique viscosity solution, denoted as $\varphi_{\text{HJ}}\left(x,t\right)$,
when the initial cost function is given by $\left(\ref{eq: implicit surface J as min over J sigma}\right)$
\cite[Theorem 8.1, p. 70]{subbotin1995generalized}. The uniqueness
of the solution $\varphi_{\text{HJ}}\left(x,t\right)$ implies that
the viscosity solution is equivalent to the value function in $\left(\ref{eq: Value function-1}\right)$.
It follows that
\begin{equation}
\varphi_{\text{HJ}}\left(x,t\right)=\underset{\alpha\left(\cdot\right)\in\mathcal{A}}{\text{inf}}\,J\left(\gamma\left(t;x,\alpha\left(\cdot\right)\right)\right),\label{eq:equiv value func}
\end{equation}
since $C\left(\alpha\right)=0$ when $\alpha\in\mathcal{A}$. Denoting
by $\alpha^{*}$ as the control that optimizes $\left(\ref{eq:equiv value func}\right)$,
we have
\[
\varphi_{\text{HJ}}\left(x,t\right)=J\left(\gamma\left(t;x,\alpha^{*}\left(\cdot\right)\right)\right).
\]
Substituting $\left(\ref{eq: J sigma as max over J theta}\right)$
and $\left(\ref{eq: implicit surface J as min over J sigma}\right)$
we have
\begin{align}
\varphi_{\text{HJ}}\left(x,t\right) & =\underset{\sigma\in\mathcal{S}_{N}}{\min}\,\underset{i\in\mathcal{V}}{\max}\,J_{i,\sigma\left(i\right)}\left(\left[\gamma\left(t;x,\alpha^{*}\left(\cdot\right)\right)\right]_{i}\right),\label{eq: value function as component}
\end{align}
where we use the notation $\left[\gamma\left(t;x,\alpha^{*}\left(\cdot\right)\right)\right]_{i}$
to represent the $i$-th block of the vector $\gamma\left(t;x,\alpha^{*}\left(\cdot\right)\right)$.
Likewise $\left(\ref{eq: family of HJ equations}\right)$ has a unique
viscosity solution,$\phi_{i,j}$, for each $i,j$ with initial cost
given $\left(\ref{eq:implicit of goal sets}\right)$, and as such
\begin{align}
\phi_{i,\sigma\left(i\right)}\left(x_{i},t\right) & =\underset{\alpha_{i}\left(\cdot\right)\in\mathcal{A}_{i}}{\text{inf}}\,J_{i,\sigma\left(i\right)}\left(\gamma_{i}\left(t;x_{i},\alpha_{i}\left(\cdot\right)\right)\right)\nonumber \\
 & =J_{i,\sigma\left(i\right)}\left(\gamma_{i}\left(t;x_{i},\alpha_{i}^{*}\left(\cdot\right)\right)\right),\label{eq: Optimal traj for vehicle-1}
\end{align}
for each $i$ and with $\alpha_{i}^{*}$ denoting the control that
optimizes $\left(\ref{eq: Optimal traj for vehicle-1}\right)$. Note
that if $J_{i,\sigma\left(i\right)}\left(\left[\gamma\left(t;x,\alpha^{*}\left(\cdot\right)\right)\right]_{i}\right)<J_{i,\sigma\left(i\right)}\left(\gamma_{i}\left(t;x_{i},\alpha_{i}^{*}\left(\cdot\right)\right)\right)$
it would contradict $\left(\ref{eq: Optimal traj for vehicle-1}\right)$
that $\alpha_{i}^{*}$ is the optimal control. Also if $J_{i,\sigma\left(i\right)}\left(\left[\gamma\left(t;x,\alpha^{*}\left(\cdot\right)\right)\right]_{i}\right)>J_{i,\sigma\left(i\right)}\left(\gamma_{i}\left(t;x_{i},\alpha_{i}^{*}\left(\cdot\right)\right)\right)$,
then it would contradict $\left(\ref{eq:equiv value func}\right)$
that $\alpha^{*}$ is the optimal control of the entire system. Therefore,
$J_{i,\sigma\left(i\right)}\left(\left[\gamma\left(t;x,\alpha^{*}\left(\cdot\right)\right)\right]_{i}\right)=J_{i,\sigma\left(i\right)}\left(\gamma_{i}\left(t;x_{i},\alpha_{i}^{*}\left(\cdot\right)\right)\right)=\phi_{i,\sigma\left(i\right)}\left(x_{i},t\right)$
and our value function in $\left(\ref{eq:equiv value func}\right)$
becomes
\[
\varphi_{\text{HJ}}\left(x,t\right)=\underset{\sigma\in\mathcal{S}_{N}}{\min}\,\underset{i\in\mathcal{V}}{\max}\,\phi_{i,\sigma\left(i\right)}\left(x_{i},t\right),
\]
and we arrive at our result.
\end{proof}

\section{\label{sec:A-Level-Set}A Level Set Method with the Generalized Hopf
Formula}

First, we introduce the Fenchel\textendash Legendre transform of a
convex, proper, lower semicontinuous function $g:\mathbb{R}^{\ell}\rightarrow\mathbb{R}\cup\left\{ +\infty\right\} $,
denoted as $g^{\star}:\mathbb{R}^{\ell}\rightarrow\mathbb{R}\cup\left\{ +\infty\right\} $,
and is defined as \cite{hiriart2012fundamentals}
\begin{equation}
g^{\star}\left(p\right)=\underset{y\in\mathbb{R}^{\ell}}{\text{sup}}\left\{ \left\langle p,y\right\rangle -g\left(y\right)\right\} .\label{eq: Fenchel transform}
\end{equation}
We propose to find the viscosity solutions of $\left(\ref{eq: family of HJ equations}\right)$
using the generalized Hopf formula.
\begin{thm}
Each viscosity solution of $\left(\ref{eq: family of HJ equations}\right)$
can be found from the formula
\begin{align}
\phi_{i,j}\left(x_{i},t\right)= & -\underset{p_{i}\in\mathbb{R}^{n_{i}}}{\text{min}}\Bigg\{ J_{i,j}^{\star}\left(e^{-tA_{i}^{\top}}p_{i}\right),\label{eq:gen hopf formula}\\
 & +\int_{0}^{t}\widehat{H_{i}}\left(s,p_{i}\right)ds-\left\langle x_{i},p_{i}\right\rangle \Bigg\}\nonumber 
\end{align}
with
\begin{equation}
\widehat{H}_{i}\left(s,p_{i}\right)=\underset{\alpha\in\mathcal{A}_{i}}{\text{sup}}\left\{ \left\langle -e^{-sA_{i}}B_{i}\alpha_{i},p_{i}\right\rangle \right\} .\label{eq:transformed ham}
\end{equation}
\end{thm}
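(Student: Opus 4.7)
The plan is to reduce (\ref{eq: family of HJ equations}) to a HJ equation whose Hamiltonian is state-independent (though time-varying) by an interaction-picture change of variable that absorbs the linear drift $A_i x_i$, and then to invoke the generalized Hopf formula of \cite{kirchner2018primaldual} valid for such state-independent Hamiltonians. The final step is to pull the resulting Fenchel conjugate back to the original data $J_{i,j}$ by a linear change of dummy variable.

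Concretely, I would set $z_i(s):=e^{-sA_i}\gamma_i(s;x_i,\alpha_i(\cdot))$. A direct differentiation gives $\dot z_i(s)=e^{-sA_i}B_i\alpha_i(s)$ with $z_i(0)=x_i$, so in the $z$-coordinates the dynamics are drift-free with time-varying input matrix $e^{-sA_i}B_i$. The terminal state reads $\gamma_i(t)=e^{tA_i}z_i(t)$, so the transformed terminal cost is $\widetilde J_{i,j}(z):=J_{i,j}(e^{tA_i}z)$, and the transformed value function $\widetilde\phi_{i,j}$ agrees with $\phi_{i,j}$ at $z=x_i$ because the initial states coincide. The Hamiltonian of the transformed problem is exactly
\begin{equation*}
\widetilde H_i(s,z_i,p_i)=\sup_{\alpha\in\mathcal{A}_i}\{\langle -e^{-sA_i}B_i\alpha,p_i\rangle\}=\widehat H_i(s,p_i),
\end{equation*}
which has no dependence on $z_i$.

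For time-varying, state-independent Hamiltonians with convex, proper, lower semicontinuous initial data, the generalized Hopf formula of \cite{kirchner2018primaldual} yields
\begin{equation*}
\widetilde\phi_{i,j}(z_i,t)=-\min_{p_i}\left\{\widetilde J_{i,j}^{\star}(p_i)+\int_0^t\widehat H_i(s,p_i)\,ds-\langle z_i,p_i\rangle\right\}.
\end{equation*}
The substitution $w=e^{tA_i}z$ in the definition of $\widetilde J_{i,j}^{\star}$ gives
\begin{equation*}
\widetilde J_{i,j}^{\star}(p_i)=\sup_{w}\{\langle e^{-tA_i^{\top}}p_i,w\rangle-J_{i,j}(w)\}=J_{i,j}^{\star}(e^{-tA_i^{\top}}p_i),
\end{equation*}
and setting $z_i=x_i$ recovers (\ref{eq:gen hopf formula}). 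As a sanity check, at $t=0$ the formula collapses to $J_{i,j}^{\star\star}(x_i)=J_{i,j}(x_i)$ by the Fenchel–Moreau theorem, matching the initial condition in (\ref{eq: family of HJ equations}).

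The main obstacle is verifying that the hypotheses of the generalized Hopf formula actually hold in this setting: $J_{i,j}$ must be convex, proper, and lower semicontinuous (arranged by taking $J_{i,j}$ as, e.g., a signed-distance or support-function representation of the closed convex set $\Omega_{i,j}$), and $\widehat H_i$ must inherit sufficient continuity and coercivity in $p_i$. Because the assumption on $A_i$ keeps $\|e^{-sA_i}\|$ bounded on $[0,t]$ and $\mathcal{A}_i$ is convex, $\widehat H_i(s,\cdot)$ is a continuous, sublinear support function for every $s$, which is precisely the regime in which \cite{kirchner2018primaldual} proves validity of the formula; checking these conditions, and in particular handling possibly unbounded $\mathcal{A}_i$, is where care is most needed.
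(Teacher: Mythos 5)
Your proposal is correct and follows essentially the same route as the paper: the interaction-picture change of variables $z_i(s)=e^{-sA_i}\gamma_i(s)$ to obtain a drift-free system with state-independent Hamiltonian $\widehat H_i(s,p_i)$ and transformed data $J_{i,j}(e^{tA_i}z_i)$, followed by the generalized Hopf formula and the identity $\widetilde J_{i,j}^{\star}(p_i)=J_{i,j}^{\star}(e^{-tA_i^{\top}}p_i)$. The regularity verification you flag as the remaining obstacle is exactly what the paper dispatches in its appendix (Lemma \ref{lem:Hopf Ham equiv}), using boundedness of $e^{-sA_i}$ on $[0,t]$ and convexity/continuity of $J_{i,j}$ inherited from the closed convex sets $\Omega_{i,j}$.
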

\begin{proof}
Proceeding similar to \cite{kirchner2017time}, we apply a change
of variables to $\left(\ref{eq:general dynamics}\right)$ with 
\begin{equation}
z_{i}\left(s\right)=e^{-sA_{i}}x_{i}\left(s\right),\label{eq:change of varibles}
\end{equation}
which results in the following system
\begin{equation}
\frac{d}{ds}z_{i}\left(s\right)=e^{-sA_{i}}B_{i}\alpha_{i}\left(s\right).\label{eq:z transformed system}
\end{equation}
By utilizing this change of variables, we construct an equivalent
HJ equation

\begin{align}
\frac{\partial\tilde{\phi}_{i,j}}{\partial s}\left(z_{i},s\right)+\widehat{H}_{i}\left(s,\nabla_{z}\tilde{\phi}_{i,j}\left(z_{i},s\right)\right) & =0,\label{eq:Non-state dependent HJ}\\
\tilde{\phi}_{i,j}\left(z,0\right) & =J_{i,j}\left(e^{tA}z_{i}\right).\nonumber 
\end{align}
From Lemma \ref{lem:Hopf Ham equiv} (given in the appendix), $\left(\ref{eq:transformed ham}\right)$
meets assumption 1-3, and by composition rule $J_{i,j}\left(e^{tA}z_{i}\right)$
is continuous \cite[Theorem 4.7]{rudin1964principles} and meets assumption
4. Therefore $\left(\ref{eq:Non-state dependent HJ}\right)$ has a
unique viscosity solution that is equivalent to the cost functional
\[
\tilde{\phi}_{i,j}\left(z_{i},t\right)=\underset{\alpha_{i}\left(\cdot\right)\in\mathcal{A}_{i}}{\text{inf}}\,J_{i,j}\left(e^{tA}\xi_{i}\left(t;z_{i},\alpha_{i}\left(\cdot\right)\right)\right),
\]
where $\xi_{i}\left(s;z_{i},\alpha\left(\cdot\right)\right)$ is a
solution trajectory that satisfies $\left(\ref{eq:z transformed system}\right)$
almost everywhere. Since
\[
z_{i}\left(t\right)=e^{-tA_{i}}x_{i}\left(t\right),
\]
we have
\begin{align*}
\tilde{\phi}_{i,j}\left(z_{i},t\right) & =\underset{\alpha_{i}\left(\cdot\right)\in\mathcal{A}_{i}}{\text{inf}}\,J_{i,j}\left(e^{tA}\xi_{i}\left(t;z_{i},\alpha_{i}\left(\cdot\right)\right)\right)\\
 & =\underset{\alpha_{i}\left(\cdot\right)\in\mathcal{A}_{i}}{\text{inf}}\,J_{i,j}\left(\gamma_{i}\left(t;x_{i},\alpha_{i}\left(\cdot\right)\right)\right)\\
 & =\phi_{i,j}\left(x_{i},t\right),
\end{align*}
noting that since $x_{i}=\gamma_{i}\left(0;x_{i},\alpha_{i}\left(\cdot\right)\right)$
by the transform $\left(\ref{eq:change of varibles}\right)$ implies
$z_{i}=x_{i}$ at $t=0$. Thus, $\tilde{\phi}\left(z_{i},t\right)=\phi\left(x_{i},t\right)$
and we can find $\phi\left(x_{i},t\right)$ by finding the viscosity
solution to $\left(\ref{eq:Non-state dependent HJ}\right)$.

Since $\Omega_{i,j}$ is assumed closed and convex, this implies $J_{i,j}$
is convex and by assumption 4 is continuous in $z_{i}$. By assumption
1, $H\left(p_{i}\right)$ is continuous in $p_{i}$, therefore $\left(\ref{eq:gen hopf formula}\right)$
gives an exact, point-wise viscosity solution for $\left(\ref{eq:Non-state dependent HJ}\right)$
\cite[Section 5.3.2, p. 215]{kurzhanski2014dynamics}.
\end{proof}
Formula $\left(\ref{eq:gen hopf formula}\right)$ shows that we can
compute a viscosity solution to $\left(\ref{eq:Non-state dependent HJ}\right)$
by solving a finite dimensional optimization problem. This avoids
constructing a discrete spatial grid, and is numerically efficient
to compute even when the dimension of the state space is large. Additionally,
no spatial derivative approximations are needed with Hopf formula-based
methods, and this eliminates the numeric dissipation introduced with
the Lax\textendash Friedrichs scheme \cite{osher1991high}, which
is necessary to maintain numeric stability in grid-based methods.

\subsection{\label{subsec:numerics}Numerical Optimization of the Hopf Formula}

We transcribe the Hopf formula into a non-linear programming problem
by approximating the integral in $\left(\ref{eq:gen hopf formula}\right)$
with an $N$-point quadrature rule sampled on the time grid 
\[
\pi^{N}=\left\{ s_{k}:k=0,\ldots,N\right\} ,
\]
with $s_{k}\in\left[0,t\right]$ and corresponding weights $w_{k}$.
Additionally, we make a simple, invertible change of variable $\tilde{p}_{i}=e^{-tA_{i}^{\top}}p_{i}$
and substituting into $\left(\ref{eq:gen hopf formula}\right)$ results
in the following unconstrained optimization problem that solves $\left(\ref{eq:gen hopf formula}\right)$:
\begin{equation}
\begin{cases}
\underset{\tilde{p}_{i}}{\min} & J_{i,j}^{\star}\left(\tilde{p}_{i}\right)+\sum_{j=0}^{N}w_{j}\widehat{H}_{i}\left(s_{k},\tilde{p}_{i}\right)\end{cases}-\left\langle e^{tA_{i}}x_{i},\tilde{p}_{i}\right\rangle ,\label{eq:Hopf change of var}
\end{equation}
with $\widehat{H}_{i}$ now being defined as
\begin{equation}
\widehat{H}_{i}\left(s,\tilde{p}_{i}\right)=\underset{\alpha_{i}\in\mathcal{A}_{i}}{\text{sup}}\left\{ \left\langle -B_{i}\alpha_{i},e^{sA_{i}^{\top}}\tilde{p}_{i}\right\rangle \right\} .\label{eq:H for change of vars}
\end{equation}

The variable transformation is done so that when the matrix $A$ has
at least one eigenvalue with strictly negative real part, we avoid
computation of $e^{-sA^{\top}}$ which would have exponentially unstable
poles since we are evaluating the matrix exponential of $-A$. This
divergence would cause sensitivity in the evaluation of the Hopf formula
in $\left(\ref{eq:gen hopf formula}\right)$ with respect to small
changes of $p$. By utilizing the variable transformation and optimizing
$\left(\ref{eq:Hopf change of var}\right)$, we avoid this sensitivity.
\begin{rem}
Often the expression in $\left(\ref{eq:H for change of vars}\right)$
is known in closed form and we present one such common case. Recall
that $\left(\cdot\right)^{\star}$ denotes the Fenchel\textendash Legendre
transform of a function defined in $\left(\ref{eq: Fenchel transform}\right)$,
and suppose $\mathcal{A}_{i}$ is the closed convex set defined as
\[
\mathcal{A}_{i}:=\left\{ u:\left\Vert u\right\Vert \leq1\right\} ,
\]
where $\left\Vert \left(\cdot\right)\right\Vert $ is any norm. Then
$\left(\mathcal{I}_{\mathcal{A}_{i}}\right)^{\star}$ defines a norm,
which we denote with $\left\Vert \left(\cdot\right)\right\Vert _{*}$,
which is the dual norm to $\left\Vert \left(\cdot\right)\right\Vert $
\cite{hiriart2012fundamentals}. From this we write $\left(\ref{eq:H for change of vars}\right)$
as
\begin{equation}
\widehat{H}_{i}\left(s,\tilde{p}_{i}\right)=\left\Vert -B_{i}^{\top}e^{sA_{i}^{\top}}\tilde{p}_{i}\right\Vert _{*}.\label{eq: Hamiltonian dual norm in z-1}
\end{equation}
\end{rem}

\subsection{Time-Optimal Control to a Goal Set\label{subsec:Time-Optimal-Control-to}}

The task of determining the control that drives the system into $\Theta$
in minimal time can be determined by finding the smallest $t$ such
that
\begin{equation}
\varphi\left(x,t\right)\leq0.\label{eq:root of phi}
\end{equation}
When the system $\left(\ref{eq:general dynamics}\right)$ is constrained
controllable, then the set of times such that $x$ is reachable with
respect to $\Theta$ contains the open interval $[t',\infty)$ \cite{fashoro1992controllability}.
This insures that if $x$ is outside the set $\Theta$ at time $t=0$,
then $\varphi\left(x,0\right)>0$ and there exists a time $t'$ such
that $\varphi\left(x,\tau\right)<0$ for all $\tau>t'$. This implies
standard root-finding algorithms can be used to find $\left(\ref{eq:root of phi}\right)$.
As noted in \cite{kirchner2018primaldual}, we solve for the minimum
time to reach the set $\Theta$ by constructing a newton iteration,
starting from an initial guess, $t_{0}$, with
\begin{equation}
t_{k+1}=t_{k}-\frac{\varphi\left(x,t_{k}\right)}{\frac{\partial\varphi}{\partial t}\left(x,t_{k}\right)},\label{eq:Newton iterate}
\end{equation}
where $\varphi\left(x,t_{k}\right)$ is the solution to $\left(\ref{eq:Initial value HJ PDE-1}\right)$
at time $t_{k}$. The value function must satisfy the HJ equation
\[
\frac{\partial\varphi}{\partial t}\left(x,t_{k}\right)=-H\left(s,x,\nabla_{x}\varphi\left(x,t_{k}\right)\right),
\]
where $\nabla_{x}\varphi\left(x,t_{k}\right)=\left(e^{t_{k}A_{1}^{\top}}\tilde{p}_{1}^{*},\cdots,e^{t_{k}A_{N}^{\top}}\tilde{p}_{N}^{*}\right)$
and each $\tilde{p}^{*}$ is the argument of the minimizer in $\left(\ref{eq:Hopf change of var}\right)$.
We iterate $\left(\ref{eq:Newton iterate}\right)$ until convergence
at the optimal time to reach, which we denote as $t^{*}$. This process
is summarized in Algorithm \ref{alg:Algorithm}.

The optimal control and trajectory for each vehicle is found directly
from the necessary conditions of optimality established by Pontryagin's
principal \cite{pontryagin2018mathematical} by noting the optimal
trajectory must satisfy
\begin{align*}
\frac{d}{ds}\gamma_{i}^{*}\left(s;x_{i},\alpha_{i}^{*}\left(\cdot\right)\right) & =-\nabla_{p}H\left(s,x_{i}\left(s\right),\lambda_{i}^{*}\left(s\right)\right)\\
 & =A_{i}\gamma_{i}\left(s;x_{i},\alpha_{i}^{*}\left(\cdot\right)\right)\\
 & +B_{i}\nabla_{p}\left\Vert -B_{i}^{\top}\lambda_{i}^{*}\left(s\right)\right\Vert _{*},
\end{align*}
where $\lambda_{i}^{*}$ is the optimal costate trajectory and is
given by
\[
\lambda_{i}^{*}\left(s\right)=e^{-\left(t^{*}-s\right)A_{i}^{\top}}\tilde{p}_{i}^{*}.
\]
This implies our optimal control is
\begin{equation}
\alpha_{i}^{*}\left(s\right)=\nabla_{p}\left\Vert -B_{i}^{\top}e^{-\left(t^{*}-s\right)A_{i}^{\top}}\tilde{p}_{i}^{*}\right\Vert _{*},\label{eq: optimal control}
\end{equation}
for all time $s\in\left[0,t^{*}\right]$, provided the gradient exists.

\begin{algorithm}
\begin{algorithmic}[1]
\Inputs{$x_i,\,J_{i,j},\, \forall i,j$}
\Initialize{$t = t_0$, $\varphi=\infty$, $\epsilon=10^{-5}$}
\While{$|\varphi|\geq\epsilon$}
	\ForAll{$i,j$}
\State{$Q_{i,j}=\underset{\tilde{p}_{i}}{\min}\,J_{i,j}^{\star}\left(\tilde{p}_{i}\right)+\sum_{j=0}^{N}w_{j}\widehat{H}_{i}\left(s_{k},\tilde{p}_{i}\right)$\hskip\algorithmicindent \hspace*{33mm} $-\left\langle e^{tA_{i}}x_{i},\tilde{p}_{i}\right\rangle$}
	\EndFor
	\State{$\varphi=\text{LBAP}\left(Q\right)$}
	\State{$p=\left(e^{tA_{1}^{\top}}\tilde{p}_{1}^{*},\cdots,e^{tA_{i}^{\top}}\tilde{p}_{i}^{*},\cdots,e^{tA_{N}^{\top}}\tilde{p}_{N}^{*}\right)$}

	\State{$t=t+\frac{\varphi}{H\left(x,p\right)}$ }
\EndWhile
\Returns{$p$, $t$}
\end{algorithmic}

\caption{Algorithm to compute the viscosity solution at the minimum time to
reach of the set $\Theta$.\label{alg:Algorithm}}
\end{algorithm}

\section{\label{sec:Results}Results}

We present results for two examples. For the first we choose a toy
example of two vehicles, each with differing single integrator dynamics.
The purpose of this example is that it allows easy visualization of
the level set propagation and the solution can be verified by comparing
with methods such as \cite{mitchell2005toolbox}, which cannot be
applied to the higher dimensional second example to follow. The LBAP
was solved using \cite[Algorithm 6.1]{burkhard2012assignment} and
the optimization $\left(\ref{eq:Hopf change of var}\right)$ used
sequential quadratic programming \cite[Chapter 18]{nocedal2006numerical}
with initial conditions $\tilde{p}_{i}=e^{tA_{i}}x_{i}\left(0\right)$
for each vehicle. To prevent a singular control condition, a slight
smoothing was applied to the Hamiltonian in $\left(\ref{eq: optimal control}\right)$
and $\left(\ref{eq:H for change of vars}\right)$ using \cite{ramirez2014}
with parameter $\mu=10^{-6}$.

\subsection{\label{subsec:Toy-Problem}Toy Problem}

\begin{figure}
\begin{centering}
\includegraphics[width=8cm]{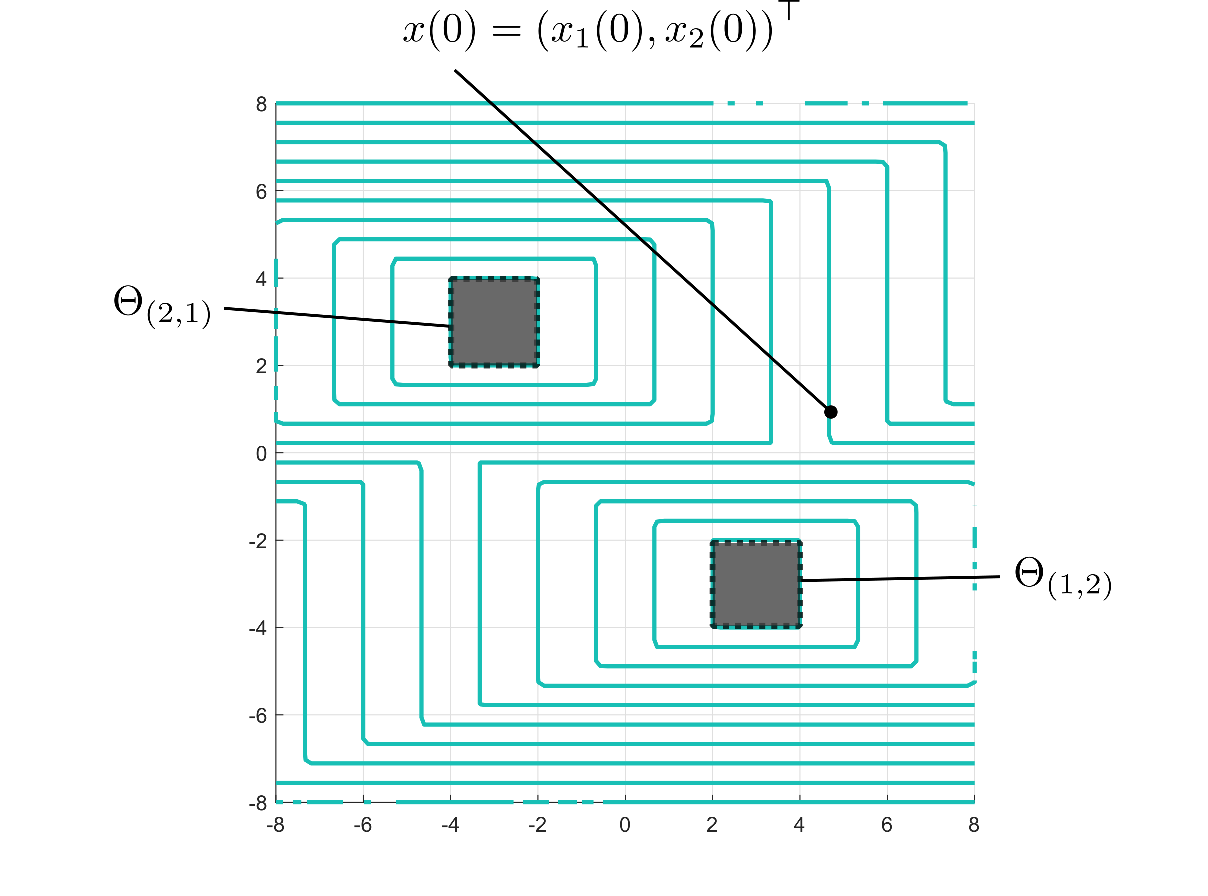}
\par\end{centering}
\caption{The two vehicle example of Section \ref{subsec:Toy-Problem}. The
zero level set evolution solved at various times shown in green. \label{fig:Example 1 level sets} }
\end{figure}
We present a two vehicle problem where the dynamics are $\dot{x}_{1}=3\alpha_{1}$
for the first vehicle and $\dot{x}_{2}=\alpha_{2}$ for the second
vehicle. The states $x_{1},x_{2}$ are the linear position of each
respective vehicle. The control is bounded by $\left|\alpha_{i}\right|\leq1$.
The goal states are $\left|x_{i}-3\right|\leq1$ for $j=1$ and $\left|x_{i}+3\right|\leq1$
for $j=2$. The level set contours of the joint space are shown in
Figure \ref{subsec:Toy-Problem}, for ten time samples equally spaced
on $t\in\left[0,4\right]$. Notice this seemingly simple heterogeneous
system can give rise to counterexamples for the assignment formulations
given in \cite{morgan2016swarm} where the sum of the vehicle distances
are used as the assignment metric and \cite{honig2018conflict} where
the time of arrival of vehicles was used. Take the initial state $x\left(0\right)=\left(4.667,0.5\right)^{\top}$,
which is shown on Figure \ref{fig:Example 1 level sets}. For the
time metric, the assignment $\left(1,2\right)$ gives $0.222+2.5=2.7223$,
while the assignment $\left(2,1\right)$ gives $2.222+1.5=3.722$,
indicating $\left(1,2\right)$ is the clear choice. However, $\left(2,1\right)$
is the global optimum in this example, as both vehicles reach their
desired goal states in a time of $2.222$, as the zero level set of
the global value function intersects the point $x\left(0\right)$
at that time, giving the minimum time-to-reach. For the distance metric,
the misassignment is more pronounced, with the assignment $\left(1,2\right)$
giving $0.667+2.5=3.1670$ as compared to the assignment $\left(2,1\right)$
gives $4.667+1.5=8.167$.

\subsection{Planar Motion\label{subsec:Planar-Motion}}

\begin{figure*}
\begin{centering}
\subfloat[\label{fig:4 veh. 10 percent}Optimal trajectories at $t=1.50$ (10\%
of path).]{\centering{}\includegraphics[width=5.75cm]{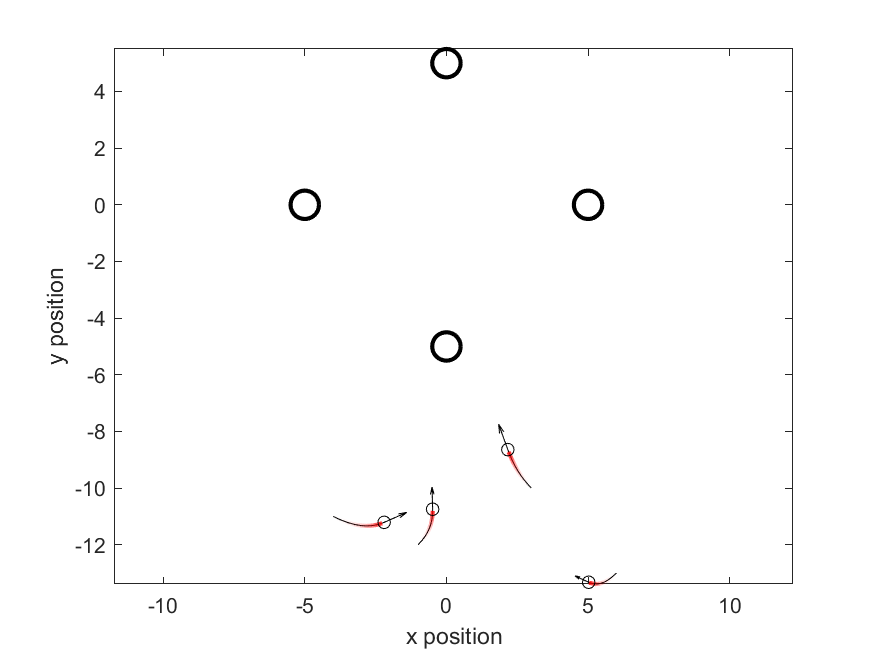}}\hspace*{\fill}\subfloat[\label{fig:4 vehicles complete}Complete optimal trajectories.]{\begin{centering}
\includegraphics[width=5.75cm]{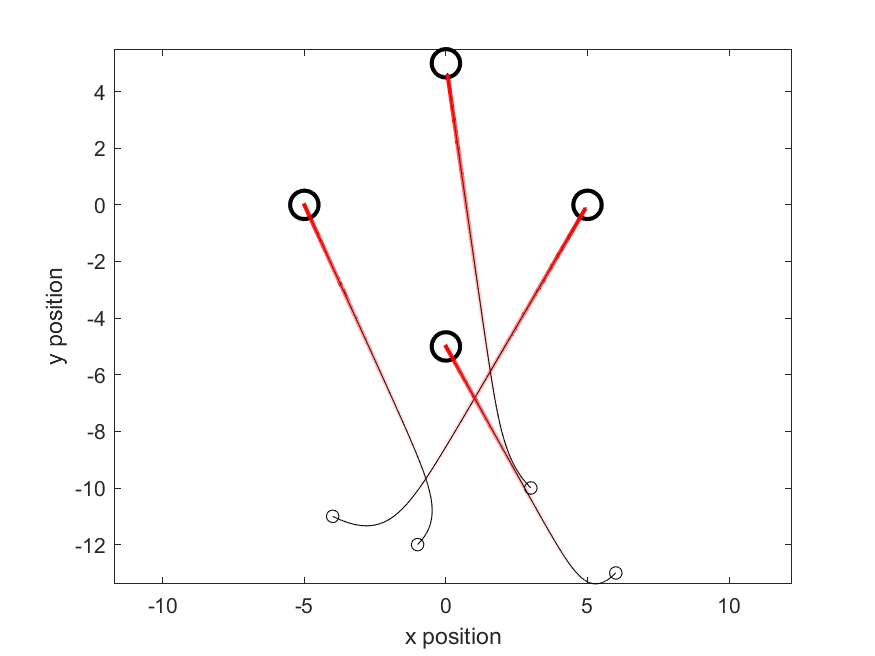}
\par\end{centering}
}\hspace*{\fill}\subfloat[\label{fig:Alternate}Complete optimal trajectories for an alternate
initial condition.]{\centering{}\includegraphics[width=5.75cm]{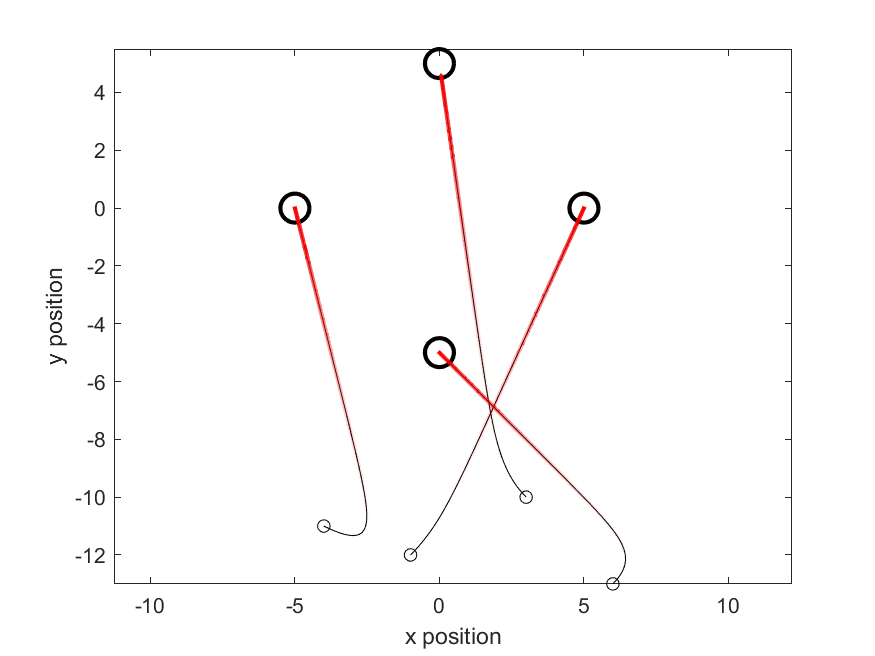}}
\par\end{centering}
\caption{The example presented in Section \ref{subsec:Planar-Motion} with
four vehicles guiding to four possible goal sets. The optimal formation
is achieved in $t^{*}=15.015$. The trajectories are shown for two
different times as they traverse the optimal path in the left two
figures. The figure on the right is the trajectories when the initial
condition for vehicle 4 is different. \label{fig:4 vehicle example}}
\end{figure*}

We choose for dynamics $\left(\ref{eq:general dynamics}\right)$ with
state $x\in\left[r,\dot{r}\right]^{\top}$, where $r\in\mathbb{R}^{2}$
is spatial position of a robot and $\dot{r}\in\mathbb{R}^{2}$ is
the velocity and 
\[
A_{i}=\left[\begin{array}{cccc}
0 & 0 & 1 & 0\\
0 & 0 & 0 & 1\\
0 & 0 & -1 & 0\\
0 & 0 & 0 & -1
\end{array}\right],\,B_{i}=\left[\begin{array}{cc}
0 & 0\\
0 & 0\\
1 & 0\\
0 & 1
\end{array}\right],
\]
for each vehicle $i\in\mathcal{V}$. The control $\alpha_{i}\in\mathbb{R}^{2}$
is constrained to lie in the set $\left\Vert \alpha_{i}\right\Vert _{2}\leq1$.
The robots are tasked with reaching the goal formation and coming
to rest, in minimum time. The goal sets each have radius of $0.5$
and the centers of the goals are located spatially at $\left(0,5\right)^{\top}$,
$\left(-5,0\right)^{\top}$, $\left(5,0\right)^{\top}$, and $\left(0,-5\right)^{\top}$.
Since the 2-norm is self-dual, the Hamiltonian $\left(\ref{eq: Hamiltonian dual norm in z-1}\right)$
for each vehicle is 
\[
\widehat{H}_{i}\left(s,\tilde{p}_{i}\right)=\left\Vert -B_{i}^{\top}e^{sA_{i}^{\top}}\tilde{p}_{i}\right\Vert _{2}.
\]
Since $A_{i}$ contains eigenvalues with negative real part, we optimize
using the variable transformation of Section \ref{subsec:numerics}.
The initial conditions of the vehicles were
\begin{align*}
x_{1}\left(0\right) & =\left(3,-10,-1,1\right)^{\top},\\
x_{2}\left(0\right) & =\left(-1,-12,1,1\right)^{\top},\\
x_{3}\left(0\right) & =\left(-4,-11,2,-1\right)^{\top},\\
x_{4}\left(0\right) & =\left(6,-13,-1,-1\right)^{\top}.
\end{align*}
Only $11$ iterations of $\left(\ref{eq:Newton iterate}\right)$ were
needed to solve for the minimum time to reach the goal formation,
which was found to be $t^{*}=15.015$. Figures \ref{fig:4 veh. 10 percent}
and \ref{fig:4 vehicles complete} show the optimal paths found from
$\left(\ref{eq: optimal control}\right)$. Figure \ref{fig:Alternate}
shows the optimal paths when the initial condition for vehicle 4 was
changed to $x_{4}\left(0\right)=\left(6,-13,1,1\right)^{\top}$ and
a different optimal assigment results.
\section{Conclusions and Future Work}

We presented how to formulate vehicle coordination problems with unknown
goal assignments as the viscosity solution to a single Hamilton\textendash Jacobi
equation. We show the solution of this single HJ PDE is equivalent
to decomposing the problem and performing a linear bottleneck assignment
using the viscosity solutions of independent single-vehicle problems.
This allows quadratic computational scaling in the number of vehicles.
Finally, a level set method based on the Hopf formula was presented
for efficient computation of the vehicle value functions, in which
each can be computed in parallel. The Hamilton\textendash Jacobi formulation
presented has other advantages for multi-robot systems, such as the
compensation of time delays which can be induced in several ways including
computation, sensing, and inter-robot communication. See for example
\cite{kirchner2019timedelay}.

Future work includes to expand the class of allowable vehicle dynamics
from general linear models to certain types of nonlinear dynamics
and to allow for dependencies between vehicles both in the dynamics
models and in the cost functional.

\section*{APPENDIX}

\begin{lem}
\label{lem:Hopf Ham equiv}Given that $\left(\ref{eq:single vehicle ham}\right)$
satisfies assumptions 1-3, then $\left(\ref{eq:transformed ham}\right)$
also satisfies assumption 1-3.
\end{lem}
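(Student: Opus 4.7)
The plan is to reduce every assumption for $\widehat{H}_{i}$ to the corresponding assumption for $H_{i}$ via the identity
\begin{equation*}
\widehat{H}_{i}(s, p_{i}) \;=\; H_{i}\bigl(s,\,0,\,e^{-sA_{i}^{\top}}p_{i}\bigr).
\end{equation*}
To obtain it, I would push the matrix exponential onto the second argument of the inner product by adjoint,
\begin{equation*}
\widehat{H}_{i}(s, p_{i}) \;=\; \sup_{\alpha_{i}\in\mathcal{A}_{i}}\bigl\langle -B_{i}\alpha_{i},\, e^{-sA_{i}^{\top}}p_{i}\bigr\rangle,
\end{equation*}
and then observe that at $x_{i}=0$ the linear term in (\ref{eq:single vehicle ham}) vanishes, while the characteristic function collapses the supremum onto $\mathcal{A}_{i}$; this is precisely the right-hand side above. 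With this identity the proof becomes essentially a composition argument.

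Assumption~1 follows immediately: the map $(s,p_{i})\mapsto(s,0,e^{-sA_{i}^{\top}}p_{i})$ is continuous because $s\mapsto e^{-sA_{i}^{\top}}$ is smooth, and $H_{i}$ is continuous by hypothesis, so the composition is continuous. Viewed as a function of $(s,x_{i},p_{i})$, $\widehat{H}_{i}$ carries no $x_{i}$-dependence, so assumption~3 is trivial: the left-hand side of the required inequality is zero, which is bounded by $\mu_{i}(p_{i})\|x'-x''\|$ for any choice of $\kappa_{i}(M)$. Only assumption~2 requires quantitative work.

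For assumption~2 I would apply the Lipschitz-in-$p$ bound of $H_{i}$ evaluated at $x_{i}=0$, combined with the linear map $e^{-sA_{i}^{\top}}$:
\begin{equation*}
\bigl|\widehat{H}_{i}(s,p') - \widehat{H}_{i}(s,p'')\bigr| \;\leq\; c_{i}\,\bigl\|e^{-sA_{i}^{\top}}(p'-p'')\bigr\| \;\leq\; c_{i}K_{i}\,\|p'-p''\|,
\end{equation*}
where $K_{i} := \sup_{s\in[0,t]}\|e^{-sA_{i}^{\top}}\|$, which is finite because $s\mapsto e^{-sA_{i}^{\top}}$ is continuous on the compact interval $[0,t]$. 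Defining $\tilde{c}_{i}:=c_{i}K_{i}$ and $\tilde{\rho}_{i}(x_{i}):=\tilde{c}_{i}(1+\|x_{i}\|)$ gives the Lipschitz bound in the required form, and the growth estimate $|\widehat{H}_{i}(s,0)| = |H_{i}(s,0,0)| \leq c_{i} \leq \tilde{\rho}_{i}(x_{i})$ is immediate from assumption~2 applied to $H_{i}$.

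The only subtlety is controlling $\|e^{-sA_{i}^{\top}}\|$ uniformly in $s$, which is the reason the finite horizon $[0,t]$ matters; no spectral assumption on $A_{i}$ is actually needed here since compactness of $[0,t]$ alone bounds the matrix exponential. Everything else is a straightforward reuse of the hypotheses on $H_{i}$.
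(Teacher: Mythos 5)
Your proof is correct and follows essentially the same route as the paper's: the identity $\widehat{H}_{i}(s,p_{i})=H_{i}(s,0,e^{-sA_{i}^{\top}}p_{i})$, continuity by composition for assumption~1, triviality of assumption~3 from the absence of state dependence, and the Lipschitz bound at $x_{i}=0$ combined with a bound on the matrix exponential for assumption~2. Your choice of $K_{i}=\sup_{s\in[0,t]}\lVert e^{-sA_{i}^{\top}}\rVert$, finite by compactness of $[0,t]$, is in fact slightly cleaner than the paper's use of $\lvert\lambda_{\max}(e^{-sA_{i}^{\top}})\rvert$, which does not bound the operator norm for non-normal matrices.
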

\begin{proof}
From assumption 1, $\widehat{H}_{i}$ is continuous by composition
rule \cite[Theorem 4.7]{rudin1964principles} and since $\left(\ref{eq:Non-state dependent HJ}\right)$
does not depend on state, it therefore trivially meets assumption
3. It remains to show that $\left(\ref{eq:transformed ham}\right)$
meets assumption 2. We know assumption 2 holds for all $x_{i}\in\mathbb{R}^{n_{i}}$
and for all $p',p''\in\mathbb{R}^{n_{i}}$, therefore the following
holds
\begin{align*}
 & \left|H_{i}\left(s,x_{i},e^{-sA_{i}^{\top}}p^{'}\right)-H_{i}\left(s,x_{i},e^{-sA_{i}^{\top}}p''\right)\right|\\
 & \leq c_{i}\left(1+\left\Vert x_{i}\right\Vert \right)\left\Vert e^{-sA_{i}^{\top}}p'-e^{-sA_{i}^{\top}}p''\right\Vert \\
 & \leq c_{i}\left(1+\left\Vert x_{i}\right\Vert \right)\left|\lambda_{\max}\left(e^{-sA_{i}^{\top}}\right)\right|\left\Vert p'-p''\right\Vert \\
 & =c\left(1+\left\Vert x_{i}\right\Vert \right)\left\Vert p'-p''\right\Vert .
\end{align*}
When $x_{i}=0$ we have
\begin{align*}
 & \left|H_{i}\left(s,0,e^{-sA_{i}^{\top}}p^{'}\right)-H_{i}\left(s,0,e^{-sA_{i}^{\top}}p''\right)\right|\\
 & =\left|\widehat{H}_{i}\left(s,p'\right)-\widehat{H}_{i}\left(s,p''\right)\right|\\
 & \leq c\left\Vert p'-p''\right\Vert \leq c\left(1+\left\Vert x_{i}\right\Vert \right)\left\Vert p'-p''\right\Vert ,
\end{align*}
for any $x_{i}$. And since $\left|\widehat{H}_{i}\left(s,x_{i},0\right)\right|=0$
for all $x_{i}$, it follows that assumption 2 is met. 
\end{proof}
\medskip{}

Proof of Lemma \ref{cor:global ham meets assumptions}.
\begin{proof}
The sum of continuous functions is also continuous \cite[Theorem 4.9]{rudin1964principles},
therefore assumption 1 is met. To prove assumption 2, we first write,
for all $\overline{p}',\overline{p}''\in\mathbb{R}^{n}$,
\begin{align}
 & \left|H\left(s,x,\overline{p}'\right)-H\left(s,x,\overline{p}''\right)\right|\nonumber \\
 & =\left|\sum_{i}H_{i}\left(s,x_{i},\overline{p}_{i}'\right)-\sum_{i}H_{i}\left(s,x_{i},\overline{p}_{i}''\right)\right|\nonumber \\
 & \leq\sum_{i}\left|H_{i}\left(s,x_{i},\overline{p}_{i}'\right)-H_{i}\left(s,x_{i},\overline{p}_{i}''\right)\right|,\label{eq:triangular equality}\\
 & \leq\sum_{i}c_{i}\left(1+\left\Vert x_{i}\right\Vert \right)\left\Vert \overline{p}_{i}'-\overline{p}_{i}''\right\Vert \nonumber \\
 & \leq\sum_{i}c_{i}\left(1+\left\Vert x\right\Vert \right)\left\Vert \overline{p}'-\overline{p}''\right\Vert \label{eq: component wise inequality}\\
 & c\left(1+\left\Vert x\right\Vert \right)\left\Vert \overline{p}'-\overline{p}''\right\Vert ,\nonumber 
\end{align}
where line $\left(\ref{eq:triangular equality}\right)$ comes from
the triangular inequality and line $\left(\ref{eq: component wise inequality}\right)$
comes by noting $\forall i$, $\left\Vert x_{i}\right\Vert \leq\left\Vert x\right\Vert $
and $\left\Vert \overline{p}_{i}'-\overline{p}_{i}''\right\Vert \leq\left\Vert \overline{p}'-\overline{p}''\right\Vert $.
Therefore, there exists a $c=\sum_{i}c_{i}$ such that the inequality
holds and we arrive at our result for part 1 of assumption 2. The
second part of assumption follows from part 1,
\begin{align*}
 & \left|H\left(s,x,0\right)\right|=\left|\sum_{i}H\left(s,x_{i},0\right)\right|\\
 & \leq\sum_{i}\left|H\left(s,x_{i},0\right)\right|\leq\sum_{i}c_{i}\left(1+\left\Vert x_{i}\right\Vert \right)\\
 & \leq\sum_{i}c_{i}\left(1+\left\Vert x\right\Vert \right)=c\left(1+\left\Vert x\right\Vert \right).
\end{align*}
And we have shown part 2. The proof of assumption 3 follows that of
above. For any compact set $M\subset\mathbb{R}^{n}$ and for all $\overline{x}',\overline{x}''\in M$
\begin{align*}
 & \left|H\left(s,\overline{x}',p\right)-H\left(s,\overline{x}'',p\right)\right|\\
 & =\left|\sum_{i}H\left(s,\overline{x}_{i}',p_{i}\right)-\sum_{i}H_{i}\left(s,\overline{x}_{i}'',p_{i}\right)\right|\\
 & \leq\sum_{i}\left|H\left(s,\overline{x}_{i}',p_{i}\right)-H_{i}\left(s,\overline{x}_{i}'',p_{i}\right)\right|,\\
 & \leq\sum_{i}\kappa_{i}\left(M\right)\left(1+\left\Vert p_{i}\right\Vert \right)\left\Vert \overline{x}_{i}'-\overline{x}_{i}''\right\Vert \\
 & \leq\sum_{i}\kappa_{i}\left(M\right)\left(1+\left\Vert p\right\Vert \right)\left\Vert \overline{x}'-\overline{x}''\right\Vert \\
 & =\kappa\left(M\right)\left(1+\left\Vert p\right\Vert \right)\left\Vert \overline{x}'-\overline{x}''\right\Vert .
\end{align*}
Therefore, there exists a $\kappa\left(M\right)=\sum_{i}\kappa_{i}\left(M\right)$
such that the inequality holds.
\end{proof}

\balance

\bibliographystyle{IEEEtran}
\bibliography{WP}

\end{document}